\def\BibTeX{{\rm B\kern-.05em{\sc i\kern-.025em b}\kern-.08em
    T\kern-.1667em\lower.7ex\hbox{E}\kern-.125emX}}
\definecolor{shadecolor}{rgb}{.9,.9,.9}
\title{\LARGE\bf Amplitude Control for Parallel Lattices of Docked Modboats 
}
\author{Gedaliah Knizhnik and Mark Yim
\thanks{The authors are with the GRASP Laboratory, University of Pensylvannia, Philadelphia, PA 19104. 
        {\tt\footnotesize knizhnik@seas.upenn.edu}}%
}
\newcommand{\copyrightstatement}{
    \begin{textblock*}{5.7in}(0.25in,0.25in) 

        \noindent
        \footnotesize
        This accepted article to ICRA is made available by the authors in compliance with IEEE policy.

        \noindent
        Please find the final, published version in IEEE Xplore, DOI: \href{https://doi.org/10.1109/ICRA46639.2022.9812381}{\textcolor{blue}{10.1109/ICRA46639.2022.9812381}}.
        

    \end{textblock*}

    \begin{textblock*}{5.7in}[0,1](0.25in,10.85in) 

        \noindent
        \footnotesize \scriptsize
        \copyright 2022 IEEE. Personal use of this material is permitted.
        Permission from IEEE must be obtained for all other uses, in any current or future media, including reprinting/republishing this material for advertising or promotional purposes, creating new collective works, for resale or redistribution to servers or lists, or reuse of any copyrighted component of this work in other works.
    \end{textblock*}
}
\DeclareMathOperator{\sign}{sgn}
\newtheorem{assumption}{Assumption}
\newtheorem{theorem}{Theorem}
\begin{document}
\bstctlcite{MyBSTcontrol} 
\copyrightstatement                    

\maketitle

\begin{abstract}
The Modboat is a low-cost, underactuated, modular robot capable of surface swimming. It is able to swim individually, dock to other Modboats, and undock from them using only a single motor and two passive flippers. Undocking without additional actuation is achieved by causing intentional self-collision between the tails of neighboring modules; this becomes a challenge when group swimming as one connected component is desirable. In this work, we develop a control strategy to allow parallel lattices of Modboats to swim as a single unit, which conventionally requires holonomic modules. We show that the control strategy is guaranteed to avoid unintentional undocking and minimizes internal forces within the lattice. Experimental verification shows that the controller performs well and is consistent for lattices of various sizes. Controllability is maintained while swimming, but pure yaw control causes lateral movement that cannot be counteracted by the presented framework.
\end{abstract}


\section{Introduction} \label{sec:intro}

Aquatic systems that can dock, undock and reconfigure are of interest to researchers; they have significant potential to facilitate ocean research and infrastructure by providing mobile platforms to land helicopters or drones, building bridges for larger vehicles\cite{Paulos2015}, or forming ocean-going manipulators. They can adapt to changing flow conditions or take precise measurements at small spatial scales. Nevertheless, work on such reconfigurable systems is not widespread. A small number of projects have used reconfiguration to multiplex swimming modes, such as AMOUR \cite{Vasilescu2010AMOURPayloads}, and ANGELS~\cite{Mintchev2014}\cite{Mintchev2014TowardsRobots}. Both of these systems use robots capable of almost holonomic motion in 3D space, but only demonstrate minimal reconfiguration in 1D.

Extensive reconfiguration, however, has only been demonstrated --- to the best of the authors' knowledge --- in two cases. The TEMP project built holonomic rectangular modules that could dock in a brickwork pattern and form a bridge~\cite{Paulos2015} or assemble into arbitrary structures~\cite{Seo2013AssemblyRobots}\cite{Seo2016AssemblyModules}. The structures could actively resist deformation due to waves~\cite{OHara2014} and the undocked modules could move in formation, but movement of the entire structure was not considered. The Roboats project has designed similar rectangular modules capable of docking and reconfiguration, which are meant to form floating platforms in the canals of Amsterdam~\cite{Wang2018DesignVehicle}\cite{Wang2020RoboatEnvironments}. Roboats can swim as a group, and can move using local information as long as the structure is symmetric~\cite{Wang2020DistributedVessels} or through a learning approach for more arbitrary structures~\cite{Park2019CoordinatedApproach}. But both TEMP and Roboats require expensive modules capable of holonomic motion and rely extensively on this ability for effective motion control.

\begin{figure}[t]
    \centering
    \includegraphics[page=5, width=\linewidth]{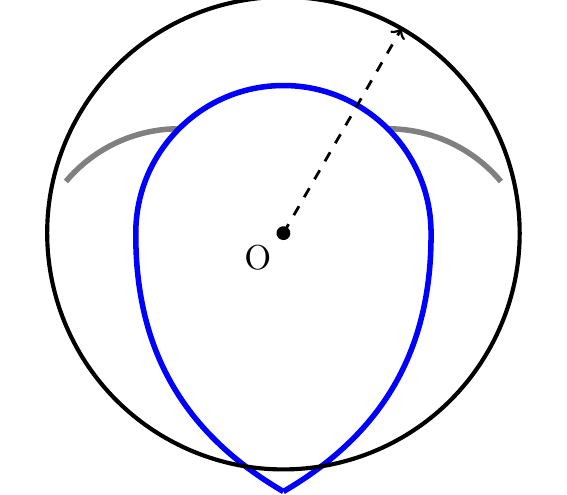}
    \caption{Parallel lattice of three docked Modboats; for each boat, the top body is shown in black, while the bottom body/tail is in blue and flippers are in gray. Magnetic docking points are shown at the cardinal points of each boat, and the motor angle $\phi$ is the angle of the bottom body \textit{relative} to the top body. Frame $W$ is the fixed world frame, and frame $S$ is the body-fixed frame at the COM, which coincides with the body-fixed frame of boat $2$. The individual boat frames are aligned with the top bodies.}
    \label{fig:diagram}
\end{figure}

The Modboat, introduced in prior work~\cite{Knizhnik2021}\cite{modboatsOnline}, is an inexpensive, underactuated planar modular robot. Passive flippers convert the oscillation of its two concentric bodies (see Fig.~{\ref{fig:diagram}} and~\cite{modboatsOnline}) into propulsion and steering, with symmetric oscillations generating propulsion and asymmetric ones generating torques. We have also shown that the Modboat is capable of docking to and undocking from other modules to form a 4-connected structured arrangement~\cite{Knizhnik2021}, which we call a \textit{lattice}. Control of this lattice, however, has not been attempted before this work, and is a significant challenge for several reasons:

\begin{enumerate}
    \item Aquatic lattice control to date (e.g Roboats and TEMP) has required holonomic modules, but Modboat modules are underactuated. Force can be created in any direction on average over a long time-scale, but not sufficiently quickly to use holonomic approaches for control.
    \item Modboat modules are connected via permanent magnets to enable latch-less docking/undocking~\cite{Knizhnik2021}. The magnetic bond is relatively weak, however, and the magnets provide effectively no resistance to rotation. Angular acceleration between modules can cause them to undock if unchecked, as can undue lateral forces.
    \item Modboat modules use intentional self-collision between their tails (see Fig.~\ref{fig:diagram} and~\cite{Knizhnik2021}) to undock without additional actuation. Swimming as a unit, however, requires avoiding unintentional undocking, which creates a complex constraint on allowable control inputs.
\end{enumerate}

These challenges require significant attention to internal forces, since we cannot create arbitrary forces to balance the lattice and must account for possible changes in thrust direction as modules shift on their docks. Our long-term goal is to control arbitrary configurations of docked Modboats, but in this work we focus on \textit{parallel lattices} (i.e. where all modules are placed on a line perpendicular to the desired ``front'' direction, such as shown in Fig.~\ref{fig:diagram}). The contribution of this work is thus a centralized control strategy for parallel lattices of Modboats that (1) functions for underactuated modules capable of thrust along a single axis, (2) minimizes internal forces within the lattice, and (3) guarantees no unintentional undocking of modules.

The rest of this work is organized as follows. In Sec.~\ref{sec:selfCollision} we discuss a strategy for avoiding unintentional undocking, and develop a control formulation that follows this strategy in Sec.~\ref{sec:control}. Sec.~\ref{sec:experiments} presents experimental verification of controller performance, which is discussed in detail in Sec.~\ref{sec:discussion}.


\section{Avoiding Unintentional Undocking} \label{sec:selfCollision}

As shown in Fig.~\ref{fig:diagram}, the bottom body of the Modboat (blue) and flippers (gray) are fully contained within the footprint of the top body (black) except for the tip of the tail. This ensures that the flippers of neighboring modules cannot mechanically interact, but the tails can be used to undock from neighboring modules by bringing them into contact (see Fig.~\ref{fig:diagram})~\cite{Knizhnik2021}, which is essentially self-collision within the lattice. This is advantageous because docking and undocking can be performed without additional actuation, but it introduces a complex constraint during group control while swimming as a single unit. 
\begin{figure}[t]
    \centering
    \subfloat[\label{fig:freqMod}]{\includegraphics[width=\linewidth, trim={0 0 0 0.5cm}]{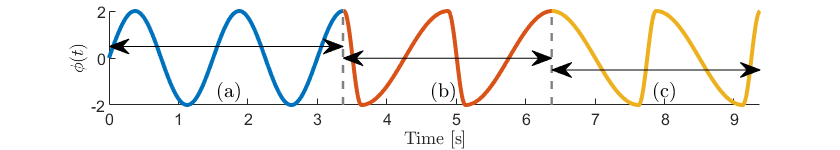}
       }
          \hfill
    \subfloat[\label{fig:pauseMod}]{\includegraphics[width=\linewidth, trim={0 0 0 0.5cm}]{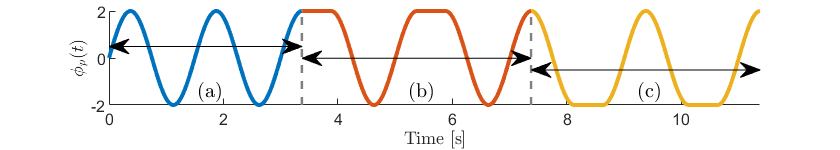}
       }
    \caption{Examples of (a) differential thrust and (b) inertial control waveforms used for single boat control as defined in prior work~\cite{Knizhnik2020a}, with colors and arrows indicating regions in which the control parameters change. Neither is suitable for group control because phase differences between neighbors will occur.}
    \label{fig:prevWaves}
\end{figure}

We cannot use either of the single-module control modes defined in our prior work~\cite{Knizhnik2020a} and shown in Fig.~\ref{fig:prevWaves} when the self-collision constraint is considered. Differential thrust, shown in Fig.~\ref{fig:freqMod}, uses frequency modulation of the motor angle to steer; inertial control, shown in Fig.~\ref{fig:pauseMod}, uses pauses in the motor angle waveform to steer. Both control methods quickly lead to phase differences between the tail angles $\phi$ of neighboring boats (see Fig.~\ref{fig:diagram}) such that no guarantees can be made on avoiding self-collision. We also wish to avoid simply limiting the amplitude, which would enforce an overly restrictive thrust limit. 

Consider two neighboring Modboats, each executing the waveform shown in~\eqref{eq:waveform} with amplitude $A$, centerline $\phi_0$, and angular frequency $\omega$, where the subscripts $i$ and $k$ indicate the boat and the cycle respectively. The parameters of~\eqref{eq:waveform} are chosen at the beginning of each cycle and executed for a complete period. Considering the complex geometry of the tail, characterizing the set of parameters $(\phi_0,A,\omega)$ for neighboring boats $i$ and $j$ that would lead to self-collision anywhere in the lattice is a complex task. We can instead consider a conservative approach that guarantees no unintentional undocking while still maintaining controllability for the lattice as a whole.
\begin{equation} \label{eq:waveform}
    \phi_i(t) = (\phi_0)_{i,k} + A_{i,k} \cos{(\omega t)}
\end{equation}

\begin{assumption}[Phase Lock] \label{asp:phaseLock}
All boats are \textbf{in phase} with one another for all time, i.e there is no phase offset in~\eqref{eq:waveform} and $\omega$ is the same for all boats\footnote{The angular frequency $\omega$ can change between cycles $k$, although without loss of generality we assume it does not.}. Control decisions are made simultaneously for all boats in the configuration at the end of each cycle.
\end{assumption}

\begin{assumption}[Forward/Reverse] \label{asp:forward}
All boats can choose the centerline of rotation $(\phi_0)_i$ to be either $0$ (forward) or $\pi$ (reverse) in each cycle. No other angles are allowed.
\end{assumption}

Under assumption~\ref{asp:forward} we can redefine~\eqref{eq:waveform} into~\eqref{eq:waveformNew}. This minimizes the discontinuity in the wave when transitioning between choices of $(\phi_0)_i$, since $\cos{\left ((\phi_0)_{i} \right )} = \pm 1$.
\begin{equation}\label{eq:waveformNew}
        \phi_i(t) = (\phi_0)_{i,k} + A_{i,k} \cos{(\omega t)}\cos{\left ((\phi_0)_{i,k} \right )}
\end{equation}

Assumptions~\ref{asp:phaseLock} and ~\ref{asp:forward} ensure that each boat maintains its ability to create both forward and reverse thrust, but guarantee that collisions can not occur, as in Theorem~\ref{th:selfCollision}.

\begin{figure}[t]
    \centering
    \includegraphics[page=3, width=\linewidth]{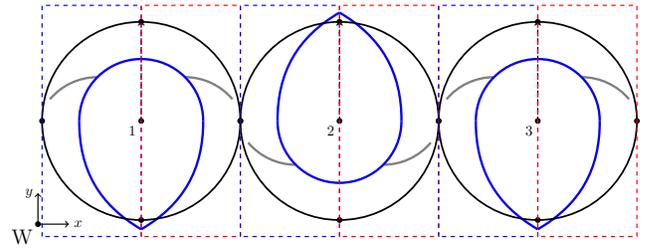}
    \caption{Parallel lattice of three docked Modboats, with red-blue tiling used in proof for Theorem~\ref{th:selfCollision}. Modboat $2$ is shown in the reverse paddling configuration ($\phi = \phi_0 = \pi$ in~\eqref{eq:waveform}), while the rest are in the forward configuration ($\phi = \phi_0=0$), shown at $t = 0$.}
    \label{fig:diagramForProof}
\end{figure}

\begin{theorem}[No Unintentional Undocking]\label{th:selfCollision}
When swimming under assumptions~\ref{asp:phaseLock} and ~\ref{asp:forward}, and with waveform given by~\eqref{eq:waveformNew}, we can guarantee that no unintentional undocking events will occur within a parallel lattice.
\end{theorem}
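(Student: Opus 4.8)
The plan is to fix an arbitrary instant $t$ and show that, under the synchronized waveform \eqref{eq:waveformNew}, the regions swept by the tails of any two boats are disjoint at that instant; since $t$ is arbitrary this rules out contact for all time. First I would reduce the problem to adjacent pairs: because the top body contains each bottom body and both flippers except for the tail tip, only a boat's immediate left/right neighbor in the single-row parallel lattice is geometrically within reach, so it suffices to certify every adjacent pair separately. I would likewise reduce the question to the tail tips (and the thin segment carrying them), since the footprints are fixed and already non-overlapping by construction of the docked lattice.

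Next I would split into two cases according to the centerlines chosen by an adjacent pair $(i,i+1)$ under Assumption~\ref{asp:forward}. In the opposite-centerline case (one forward, one reverse), \eqref{eq:waveformNew} confines a forward tail to deflections about $\phi=0$ and a reverse tail to deflections about $\phi=\pi$. Provided the amplitude keeps each tail within its own cone, the two tails live in opposite half-planes separated by the common centerline of the row (the line through all boat centers, perpendicular to the front), and hence cannot meet. The interesting case is the same-centerline case (both forward or both reverse), which is where Assumption~\ref{asp:phaseLock} does the real work.

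For the same-centerline case I would exploit phase lock directly: at the fixed instant $t$ every boat shares the same value $c=\cos(\omega t)$, so both tails deflect in the same rotational sense and by proportional amounts simultaneously. The key observation is that when boat $i$'s tip swings toward boat $i+1$, boat $i+1$'s tip swings the same way and vacates exactly the region boat $i$ is entering, so the two tails never occupy the same place at the same phase. I would make this precise with the red--blue tiling of Fig.~\ref{fig:diagramForProof}: partition the swept region by the perpendicular bisectors between consecutive centers (and by the row centerline separating the forward and reverse halves), color the resulting cells alternately, and show that at every phase $c\in[-1,1]$ each tail tip remains within the single cell assigned to its boat. Since adjacent cells carry opposite colors and the synchronized sweep preserves the cell assignment, no two tips can coincide, giving the claim for the pair and, via the pairwise reduction, for the whole lattice.

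The main obstacle I anticipate is this same-centerline case, specifically showing that the instantaneous synchronization is strong enough that a tip which does cross a static midline still never lands where its neighbor currently sits. This cannot be settled by a purely static ``each tail stays on its own side'' bound; it requires tracking the monotone relationship between the common phase $c$, the deflection $A_{i,k}\,c$, and the resulting tip coordinate, and then verifying cell containment uniformly in $c$. Secondary care is needed to (i) pin down the amplitude regime for which the opposite-centerline half-plane separation holds, and (ii) account for the finite width and offset of the physical tail tip rather than treating it as an ideal point, so that the tiling argument faithfully covers the geometry depicted in Fig.~\ref{fig:diagramForProof}.
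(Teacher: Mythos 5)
Your intuition for the same-centerline case --- that phase lock makes every tail vacate the contested region exactly when its neighbor enters it --- is the heart of the paper's argument, but your case decomposition contains a genuine gap, and it sits in the case you dismiss as easy. In the opposite-centerline case you claim the two tails ``live in opposite half-planes separated by the common centerline of the row,'' a static front/back separation that holds only ``provided the amplitude keeps each tail within its own cone.'' That proviso is not satisfied by the system the theorem governs: the commanded amplitudes run up to $2.75~\si{rad} > \pi/2$ (see Sec.~\ref{sec:amplitude} and Fig.~\ref{fig:forceFromAmp}), so a forward tail routinely swings past the row centerline into the rear half-plane and a reverse tail into the front one. You cannot rescue this by restricting amplitude, because the theorem must cover the amplitudes the controller actually uses. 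Moreover, this case is precisely why the waveform was changed from \eqref{eq:waveform} to \eqref{eq:waveformNew}: the factor $\cos\left((\phi_0)_{i,k}\right)=\pm 1$ mirrors the oscillation of reverse boats so that, at every instant, a reverse tail deflects to the \emph{same lateral side} (left or right of its own boat) as every forward tail. Assumption~\ref{asp:phaseLock} does the ``real work'' in this case just as much as in the same-centerline one; treating it as static geometry misses the content of Assumption~\ref{asp:forward} combined with \eqref{eq:waveformNew}.

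The repair is the paper's single unified invariant, which needs no case split: tile each boat's footprint into a left (red) and right (blue) half, so that the two regions meeting at any docking boundary carry opposite colors. Because of the mirroring factor, every tail --- forward or reverse --- lies in a red region whenever $\cos(\omega t)>0$ and in a blue region whenever $\cos(\omega t)<0$; a collision would require two tails simultaneously occupying adjacent, hence oppositely colored, regions, which is impossible. This also exposes the second flaw in your plan: your proposed verification that ``each tail tip remains within the single cell assigned to its boat \ldots uniformly in $c$'' cannot succeed, again because of the amplitude range (a tip with $A=2.75~\si{rad}$ crosses several of your front/back cells within one half-cycle, and it crosses the perpendicular bisector near the dock, which is exactly the collision concern). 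The correct containment statement is dynamic, not static: the cell a tip occupies changes with the phase, but it changes synchronously and with identical color for all boats, which is exactly what \eqref{eq:waveformNew} plus Assumption~\ref{asp:phaseLock} guarantees.
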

\begin{proof}
Construct a lattice of neighboring red and blue regions, as shown in Fig.~\ref{fig:diagramForProof}. For any lattice length, it is trivial to periodically tile. Consider some time $t_0$ when $\cos{(\omega t_0)}=1$. Then $\phi_i = (\phi_0)_i + A\cos{\left((\phi_0)_{i} \right )}$ $\forall i$, and  
\begin{equation*}
\phi_i = \begin{cases}
(\phi_0)_i & t = t_0 + T/4 \\
(\phi_0)_i - A\cos{\left((\phi_0)_{i} \right )} & t = t_0 + T/2 \\
(\phi_0)_i & t = t_0 + 3T/4 \\
(\phi_0)_i + A\cos{\left((\phi_0)_{i} \right )} & t = t_0 + T
\end{cases} \quad \forall i
\end{equation*}

Thus the tail segments are \textbf{all} in red regions\footnote{The tail tip will eventually enter the neighboring boat's blue region. For the purposes of the proof, we consider a slightly interior point that remains within the red region.} when $t \in (t_0, t_0 + T/4) \cap (t_0 + 3T/4,t_0 + T)$, and blue regions when $t \in (t_0 + T/4, t_0 + 3T/4)$. Thus, at all times all tails occupy identically colored regions. By construction, no neighboring regions share a color, so no collisions are possible. 
\end{proof}

Note that waveform~\eqref{eq:waveformNew} and Theorem~\ref{th:selfCollision} rely exclusively on internal motor position. Thus their guarantees remain in place despite external disturbances, unless water conditions cause large enough forces to detach the magnets and undock modules. Dealing with disturbances of such magnitude is outside the scope of this work.


\section{Controlling the lattice} \label{sec:control}

Consider a Modboat lattice as shown in Fig.~\ref{fig:diagram}. A surge force $F_y$ and torque $\tau$ applied to the lattice are sufficient to control it in the plane, and for $N\geq 2$ boats these can be arbitrarily set by the modules (within actuation limits). For $N>2$, however, the lattice is overdetermined. Thus the challenge is to determine what forces each Modboat module must apply to steer the lattice.

Let frame $W$ denote the fixed world frame, and let frame $S$ represent the body-fixed structural frame centered at the center of mass (COM) of the lattice. For each boat $i \in [1,\hdots,N]$, frame $B_i$ is then attached to its own COM and aligned with the top body orientation. Without loss of generality, we can define $B_i$ $\forall i$ to be aligned with $S$ so that only a translation is needed to transform between them. We define $x_i$ to be the distance along $\hat{x}_S$ from $S$ to $B_i$.

In the following we assume a planar model (i.e. the water is flat and no waves are present), and that there are no hydrodynamic interactions between boats in the lattice. The extent to which this latter assumption is valid is left to future work, but the controller performance presented in Sec.~\ref{sec:experiments} shows that it is not unreasonable. 


\subsection{Dynamic Model}

When executing the waveform given in~\eqref{eq:waveformNew}, each Modboat can be considered to be producing thrust $f_i \in [-f_{max},f_{max}]$ \textit{on average} along the direction given by $(\phi_0)_i$ over a full cycle of length $T$ (where $T$ is the period corresponding to the angular frequency $\omega$). Under assumption~\ref{asp:forward} each force $f_i$ acts along the $\hat{y}_S$ (surge) axis.

\begin{figure}[t]
    \centering
    \includegraphics[width=\linewidth]{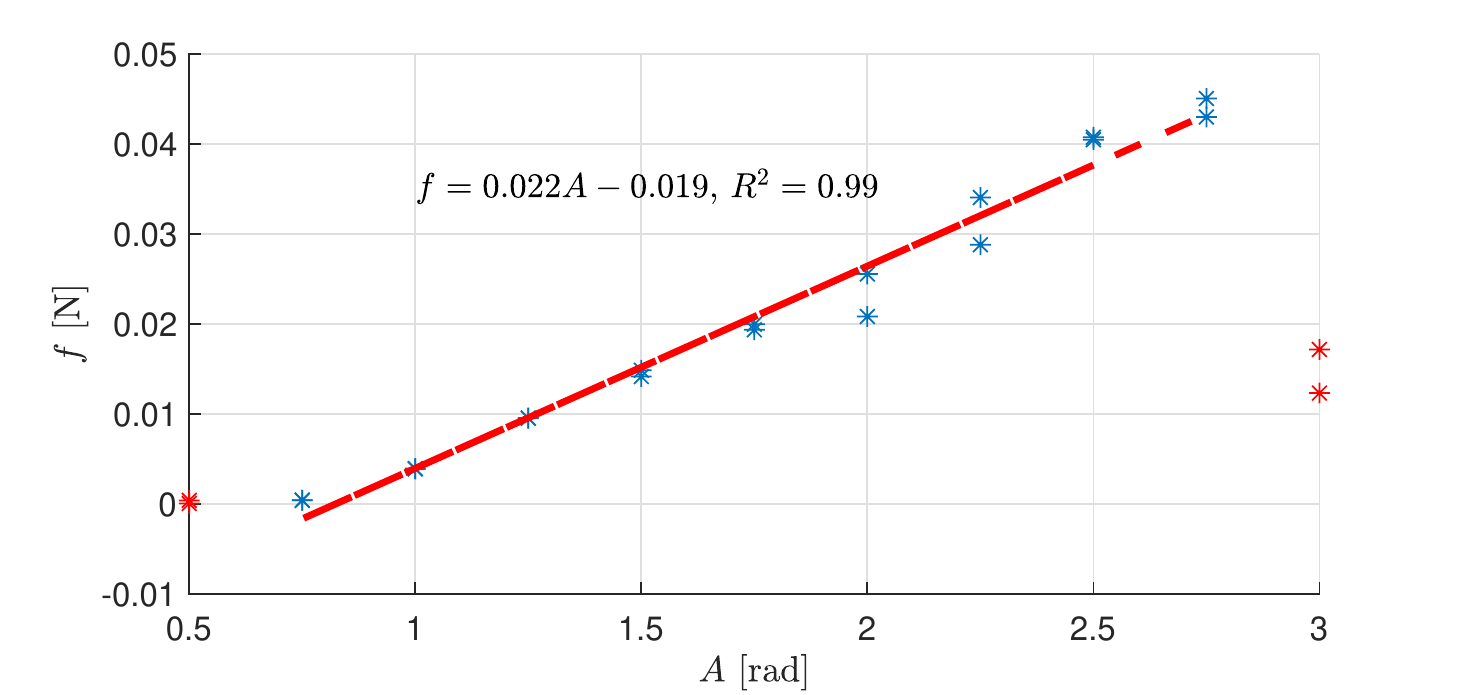}
    \caption{The experimentally determined force vs. amplitude curve for a period of oscillation $T=1.5~\si{s}$. Data points are shown in blue, and red stars indicate poor force generation due to incomplete flipper activation (at low amplitudes) or significant reverse thrust (at high amplitudes).}
    \label{fig:forceFromAmp}
\end{figure}

We define $v_y$ and $a_y$ as the velocity and acceleration along the surge axis, respectively. We can then write the equations of motion for the lattice as in~\eqref{eq:eom-y} and~\eqref{eq:eom-yaw}, where $m = \sum{m_i}$ is the total mass of the lattice, and $I$ is the total moment of inertia about the $z$ axis. The angular acceleration and velocity of the lattice are given by $\alpha$ and $\Omega$, respectively, and $C_L$ and $C_R$ are drag coefficients in the forward and yaw directions, respectively. Note that the equations are written continuously in time since the average thrust is used.
\begin{align}
    m a_y &= \sum_i f_i \hphantom{x_i} - C_L \abs{v_y} v_y  \label{eq:eom-y}\\
    I \alpha &= \sum_i f_{i}x_{i} - C_R \abs{\Omega} \Omega \label{eq:eom-yaw}
\end{align}
Our desired controller will track a commanded surge velocity $v_c$. Since this velocity will remain generally consistent, we can consider forward motion as occurring in steady state, in which thrust balances drag on average. Thus we can take $a_y = 0$. The yaw angle is allowed to change more frequently, so we do not assume steady state in yaw.

Let $\vec{f}\in\mathbb{R}^N$ represent the vector of boat forces $\vec{f} = [ \begin{matrix} f_1 & \hdots & f_N \end{matrix} ]^T$. We can then define a structural matrix $P$ as in \eqref{eq:structuralMatrix}, and~\eqref{eq:eom-y} and~\eqref{eq:eom-yaw} can be rewritten in vector form~\eqref{eq:eom-mat1}.
\begin{equation} \label{eq:structuralMatrix}
    P = \begin{bmatrix} 1 & 1 & \hdots & 1 & 1 \\ x_1 & x_2 & \hdots & x_{N-1} & x_N \end{bmatrix}
\end{equation}
\begin{equation} \label{eq:eom-mat1}
    P\vec{f} = \begin{bmatrix}  C_L\abs{v_c}v_c \\ I\alpha + C_R\abs{\Omega}\Omega \end{bmatrix}
\end{equation}

In~\eqref{eq:eom-mat1}, $P\vec{f}\in\mathbb{R}^2$ represents the surge force and yaw torque that steers the lattice as a whole. Applying suitable forward force will create surge velocity, while torquing the lattice allows it to maintain a desired yaw and heading. We can then use the Moore-Penrose inverse $P^+ = P^T(PP^T)^{-1}$ to find the forces desired from each boat, as in~\eqref{eq:forces}.
\begin{equation} \label{eq:forces}
    \vec{f} = P^+\begin{bmatrix}  C_L\abs{v_c}v_c \\ I\alpha + C_R\abs{\Omega}\Omega \end{bmatrix}
\end{equation}

\begin{figure}[t]
    \centering
    \subfloat[\label{fig:linearDrag}]{\includegraphics[width=\linewidth, trim={0 0 0 0.5cm}]{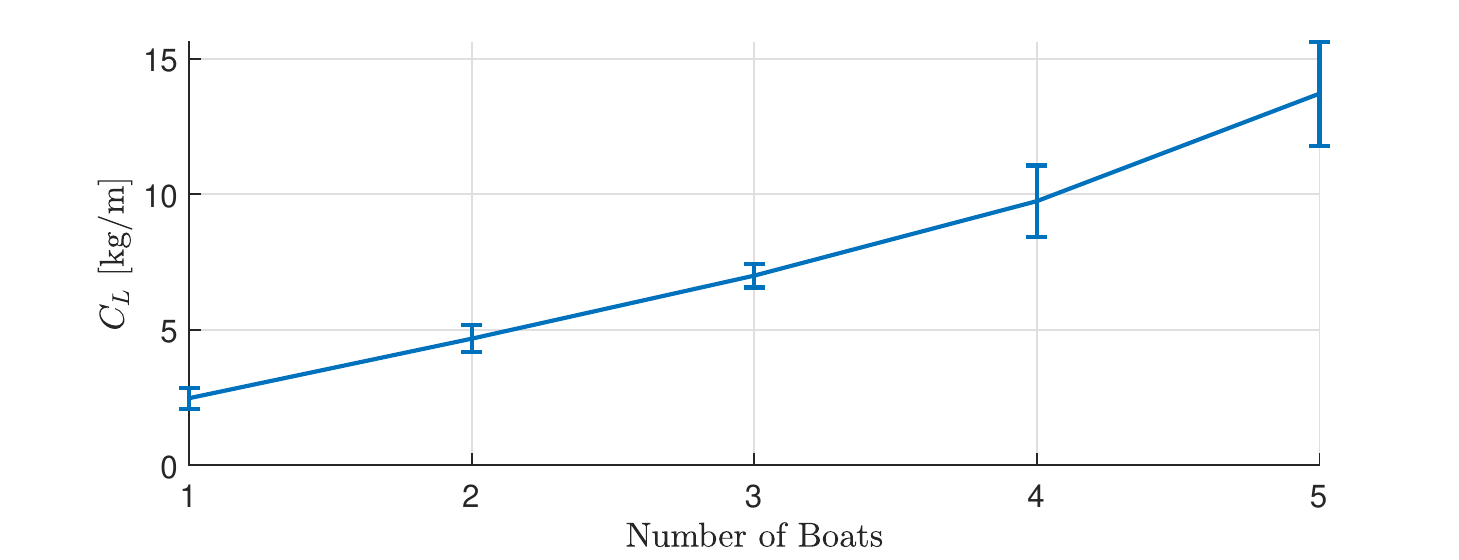}
       }
          \hfill
    \subfloat[\label{fig:angularDrag}]{\includegraphics[width=\linewidth, trim={0 0 0 0.5cm}]{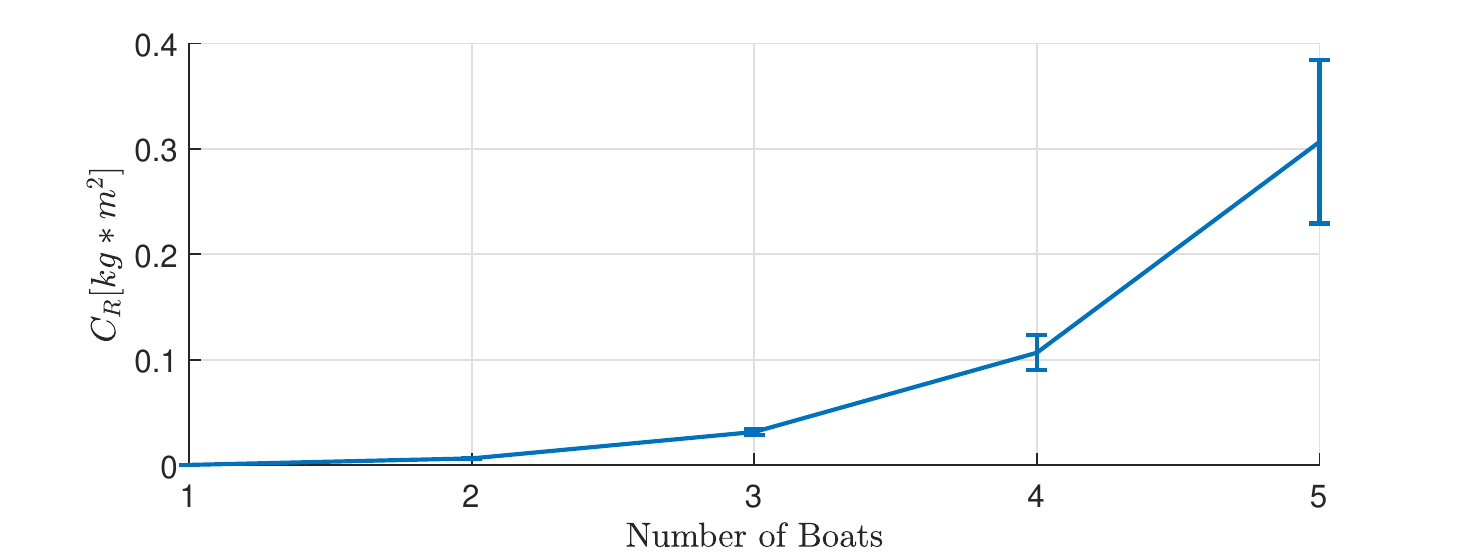}
        }
    \caption{Experimentally determined drag coefficients (a) $C_L$ and (b) $C_R$ vs. number of boats in a parallel configuration, as defined in~\eqref{eq:eom-y} and~\eqref{eq:eom-yaw}. The linear drag coefficient $C_L$ is roughly linear in the number of boats, while the angular drag coefficient $C_R$ is roughly quadratic. Error bars indicate one standard deviation.}
    \label{fig:dragCoeffs}
\end{figure}

Using~\eqref{eq:forces} to distribute forces among the modules results in a linear distribution along the $x$-axis of the lattice; a similar approach was used by Gabrich to distribute forces in a lattice of docked quadrotors~\cite{Gabrich2020ModQuad-DoF:Quadrotors}. Other distributions are possible, but a linear distribution most closely matches the internal dynamics that would be observed if the lattice were a single rigid body. As discussed in Sec.~\ref{sec:intro}, Modboat modules are docked using passive magnets that allow the units to rotate somewhat relative to one another, and significant intra-lattice forces can cause enough rotation to undock. Maintaining a rigid-body force distribution minimizes such forces and oscillation between neighboring modules.


\subsection{Control Input}

The control structure given in~\eqref{eq:forces} can track a desired surge velocity $v_d$ and yaw angle $\theta_d$. Since forward motion occurs in steady-state, it should be sufficient to use the desired velocity $v_d$ as the commanded velocity $v_c$ in~\eqref{eq:forces}. In practice it is observed that this is not sufficient, however, and using $v_d$ directly results in very slow observed velocity $v_{obs}$. Instead, an artificial linear acceleration is calculated in~\eqref{eq:errVel} and~\eqref{eq:pidVel} and used to integrate the commanded velocity as in~\eqref{eq:vCommand}. This discrepancy between $v_c$ and $v_d$ may be a result of hydrodynamic interactions between neighbors, but a full consideration is left to future work.
\begin{align}
    e_{v} &= v_{d} - v_{obs} \label{eq:errVel}\\ 
    \tilde{a}_y &= K_{pv} e_{v} + K_{dv} \frac{de_{v}}{dt} \label{eq:pidVel} \\
    v_c &= v_d + \int_{0}^t \tilde{a}_y d\tau \label{eq:vCommand}
\end{align}

The yaw angle can be commanded by a standard PD control loop on the angular acceleration, as given in~\eqref{eq:errYaw} and \eqref{eq:pidYaw}, where $\Theta$ and $\Theta_{des}$ are the observed and desired yaw angles for the lattice, respectively. 
\begin{align}
    e_{\Theta} &= \Theta_{des} - \Theta \label{eq:errYaw}\\ 
    \alpha &= K_{p\Theta} e_{\Theta} + K_{d\Theta} \frac{de_{\Theta}}{dt} \label{eq:pidYaw}
\end{align}


\subsection{Input Values} \label{sec:amplitude}

Eq.~\eqref{eq:forces} gives the force distribution for each module and is valid for any single-thruster robot, but the required input for the Modboats is the choice of centerline $(\phi_0)_i$ and oscillation amplitude $A_i$. Theorem~\ref{th:selfCollision} guarantees there will be no unintentional undocking, so we can safely map the desired forces to those values. Since there is a practical maximum to the force the Modboats can generate, $f_i$ is first capped by $f_{max}$. Leveraging the symmetry of the Modboat the centerline for each boat $(\phi_0)_i$ is determined by $\sign{(f_i)}$, and the amplitude is determined by $\abs{f_i}$.

The mapping $f_i = f(A_i)$ can be determined experimentally as follows. We created a parallel lattice of three Modboats and commanded only the center module to execute the waveform in~\eqref{eq:waveformNew} with $T = 1.5~\si{s}$ and $A\in[0.5,3.0]~\si{rad}$. The period-wise surge velocity $v_y$ was calculated for the lattice as in~\eqref{eq:periodwiseVelocity}, and the steady-state velocity was converted to force via the quadratic drag relationship $f_i = C_L v_y^2$ and the drag coefficient as calculated below.
\begin{equation} \label{eq:periodwiseVelocity}
    \vec{v}_T(t) = \frac{\vec{x}(t) - \vec{x}(t-T)}{T}
\end{equation}

Fig.~\ref{fig:forceFromAmp} shows the resulting mapping $f = f(A)$, which is linear within the range $A\in[0.75,2.75]~\si{rad}$. Below $0.75~\si{rad}$ (with $T=1.5~\si{s}$) the flippers do not fully open, so negligible thrust is produced. Above $2.75~\si{rad}$, the tail rotates enough to produce significant reverse thrust during a portion of the cycle. Thus we intentionally limit the maximum allowable amplitude to $2.75~\si{rad}$\footnote{In practice the maximum amplitude used experimentally is $2.5~\si{rad}$, since higher amplitudes are found to cause the lattice to shake internally.}.

The mass of an individual boat $m_i$ can be measured, and the moment of inertia for an individual boat $I_i$ can be calculated from its Solidworks model files. The total moment of inertia $I$ can then be calculated via the parallel axis theorem for any lattice. The drag coefficients $C_L$ and $C_R$ can be experimentally calculated; a linear(angular) impulse is delivered to the lattice, and the resulting linear(angular) velocity is tracked. A nonlinear least-squares fit to a quadratic drag model then gives $C_L$ and $C_R$. The resulting drag coefficients are shown in Fig.~\ref{fig:dragCoeffs}, where $C_L$ is shown to be roughly linear and $C_R$ is roughly quadratic.

\begin{table}[t]
    \centering
    \caption{Mass and inertia for each parallel boat configuration.}
    \begin{tabular}{rl|c|c|c|c|c}
    \toprule
    $N$ & Boats & 1 & 2 & 3 & 4 & 5 \\ \midrule
    $M$ & $[\si{kg}]$   &  0.66 & 1.32 & 1.98 & 2.64 & 3.30\\
    $I$ & $[\si{g*m^2}]$ &  2.05 & 11.8 & 36.8 & 84.8 & 164 \\
    $C_L$ & $[\si{kg/m}]$ & 2.48 & 4.67 & 7.00 & 9.75 & 13.7 \\
    $C_R$ & $[\si{g*m^2}]$ & 0.40 & 6.50 & 32.0 & 107 & 307 \\ \bottomrule
    \end{tabular}
    \label{tab:parameters}
\end{table}

\begin{table}[t]
    \centering
    \caption{Controller performance across all configurations as IQR. RMS error is after the rise time, from $0.3-0.9$ of the step.}
    \begin{tabular}{rl|c|c}
    \toprule
     & & Swim/Yaw only & Swim and Yaw  \\ \midrule
    $v_y$ RMS & $[\si{cm/s}]$ & $[0.13,0.38]\hphantom{^\dagger}$ & $[0.23\hphantom{0},0.42]\hphantom{^\dagger}$ \\
    $v_y$ Rise & $[\si{s}]$   & $[4.6\hphantom{0}, 6.4\hphantom{0}]\hphantom{^\dagger}$ & $[4.5\hphantom{00},8.0\hphantom{0}]\hphantom{^\dagger}$ \\
    $\theta$ RMS & $[\si{rad}]$ & $[0.10,0.24]\hphantom{^\dagger}$ & $[0.065,0.15]^\dagger$\\
    $\pi/2$ step $\theta$ Rise & $[\si{s}]$ & $[3.7\hphantom{0},5.0\hphantom{0}]\hphantom{^\dagger}$  & N/A \\
    $\pi$ step $\theta$ Rise &  $[\si{s}]$ & $[3.9\hphantom{0}, 6.9\hphantom{0}]^\dagger$ & $[5.7\hphantom{00},13.3]\hphantom{^\dagger}$ \\ \bottomrule \vspace{-2ex} \\  
    \multicolumn{4}{l}{$^\dagger$ Statically significant improvement over the other case in each row.} \vspace{-4ex}
    \end{tabular}
    \label{tab:performanceQ}
\end{table}


\section{Experiments} \label{sec:experiments}

\begin{figure}[t]
    \centering
    \includegraphics[width=\linewidth]{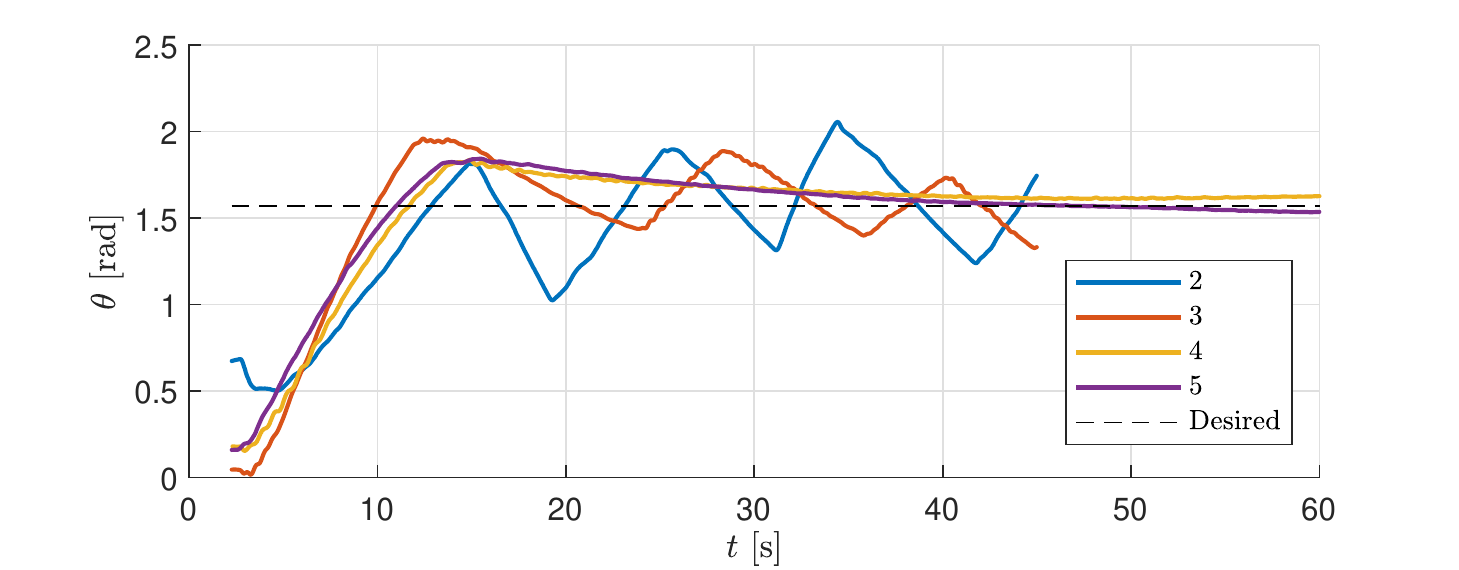}
    \caption{Yaw response to a step input of $\pi/2~\si{rad}$ for configurations of 2--5 boats. The controller provides comparable rise time performance regardless of the number of boats in the lattice, but oscillation amplitudes decrease as the lattice grows.}
    \label{fig:yawResults}
\end{figure}

\begin{figure}[t]
    \centering
    \includegraphics[width=\linewidth]{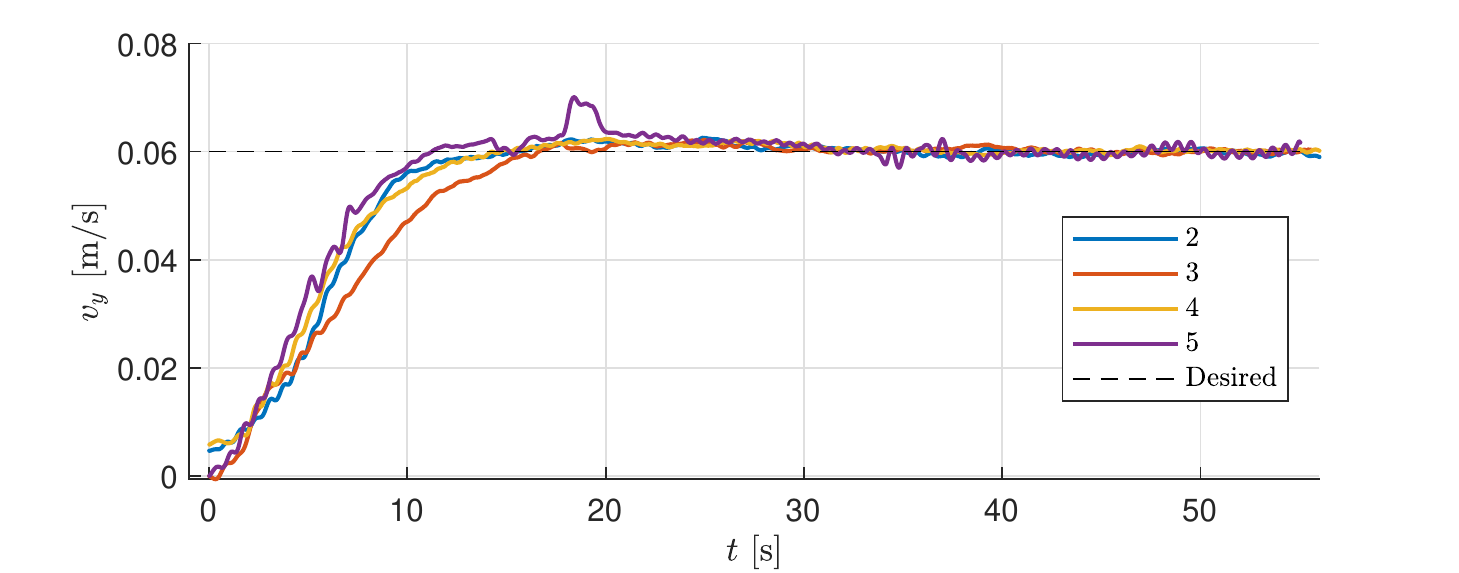}
    \caption{Velocity response to a step input of $6~\si{cm/s}$ for configurations of 2--5 boats. The controller provides comparable performance regardless of the number of boats in the lattice.}
    \label{fig:velResults}
\end{figure}

Experiments were performed in a $4.5~\si{m} \times 3.0~\si{m} \times 1.2 ~\si{m}$ tank of still water. An OptiTrack motion capture system provided planar position, orientation, and velocity data at $120~\si{Hz}$, and a MATLAB script calculated control inputs and transmitted waveform parameters to the various boats over WiFi. All experiments were performed for configurations of 2--5 boats aligned in parallel (as in Fig.~\ref{fig:diagram}), and their measured parameters are presented in Table~\ref{tab:parameters}.

The drag coefficients for parallel configurations of 1--5 boats\footnote{Although our controller is degenerate for a single boat, we include its parameters in Table~\ref{tab:parameters} for completeness.} were evaluated by applying impulses and performing a least-squares fit of a quadratic drag model to the resulting velocity data. The resulting coefficients are shown in Fig.~\ref{fig:dragCoeffs}, where the linear drag is roughly linear in the number of boats, and the angular drag is roughly quadratic.

Given the drag coefficients and the thrust-amplitude map in Fig.~\ref{fig:forceFromAmp}, we evaluated the yaw and velocity tracking abilities of the controller. Yaw control was evaluated by commanding $v_y=0$ and a step input for $\theta_{des}$; results are shown in Fig.~\ref{fig:yawResults}, and a sample trajectory is shown in Fig.~\ref{fig:sidewaysAfterYaw}. Velocity tracking was similarly evaluated by commanding a step input forward velocity of either $4~\si{cm/s}$ or $6~\si{cm/s}$ while not commanding a desired yaw; representative results are shown in Fig.~\ref{fig:velResults}. Finally, the combined controller was evaluating by commanding a desired yaw $\theta_{des}$ and velocity for $45~\si{s}$ and then changing the yaw to $\theta_{des}+\pi$ (i.e. a $180^\circ$ turn) with the same velocity for another $45~\si{s}$. A sample such trajectory is shown in in Fig.~\ref{fig:trajectory}. 

To evaluate the stability of the controller for various configurations, we also considered how the system would behave under mismatched drag coefficients. To consider the most extreme example of this, the five-boat parallel configuration evaluations were repeated with the assigned drag coefficients as measured for a two-boat configuration, and the results are shown in Fig.~\ref{fig:mismatched}.

\begin{figure}[t]
    \centering
    \includegraphics[width = \linewidth, trim={0 0 0 0.5cm}]{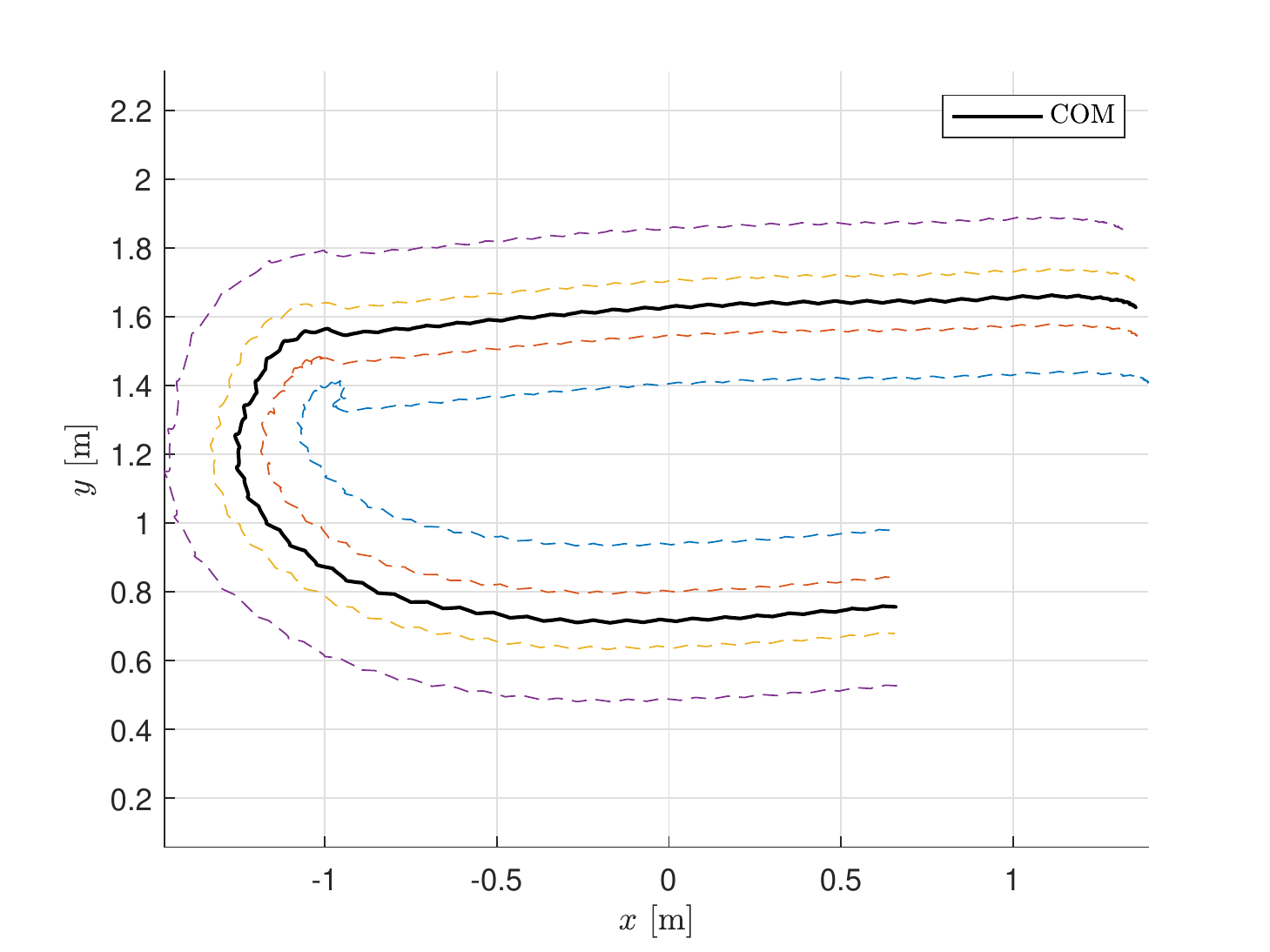}
    \caption{Five boat configuration tracking a yaw of $\pi~\si{rad}$ (left) for $45~\si{s}$, then tracking a yaw of $0~\si{rad}$ (right) for $45~\si{s}$, all while maintaining a velocity of $6~\si{cm/s}$. Each dashed color is an individual module, and the the solid black line is the center of mass.}
    \label{fig:trajectory}
\end{figure}

\begin{figure}[t]
    \centering
    \includegraphics[width=\linewidth, trim={0 0 0 0.5cm}]{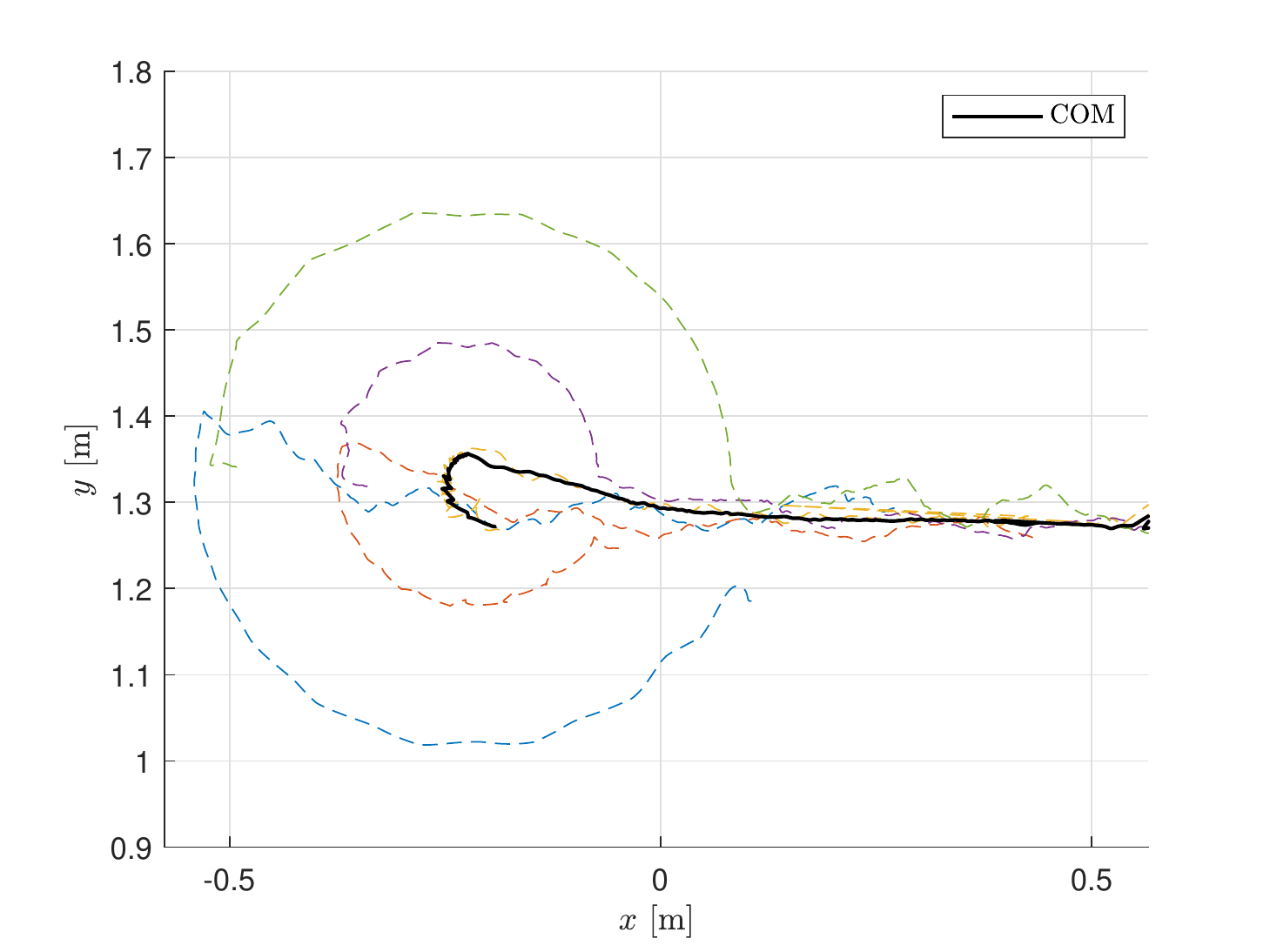}
    \caption{Trajectory of five boat lattice responding to a $\pi~\si{rad}$ step input without a desired velocity. Dotted lines show individual boat trajectories, while the solid line follows the center of mass. The lattice turns in place until the desired yaw is achieved, and then drifts uncontrollably sideways due to the discontinuity in~\eqref{eq:waveformNew}.}
    \label{fig:sidewaysAfterYaw}
    \vspace{-0.5cm}
\end{figure}

\begin{figure}[t]
    \centering
    \subfloat[\label{fig:mismatchedVel}]{\includegraphics[width=\linewidth, trim={0 0 0 0.5cm}]{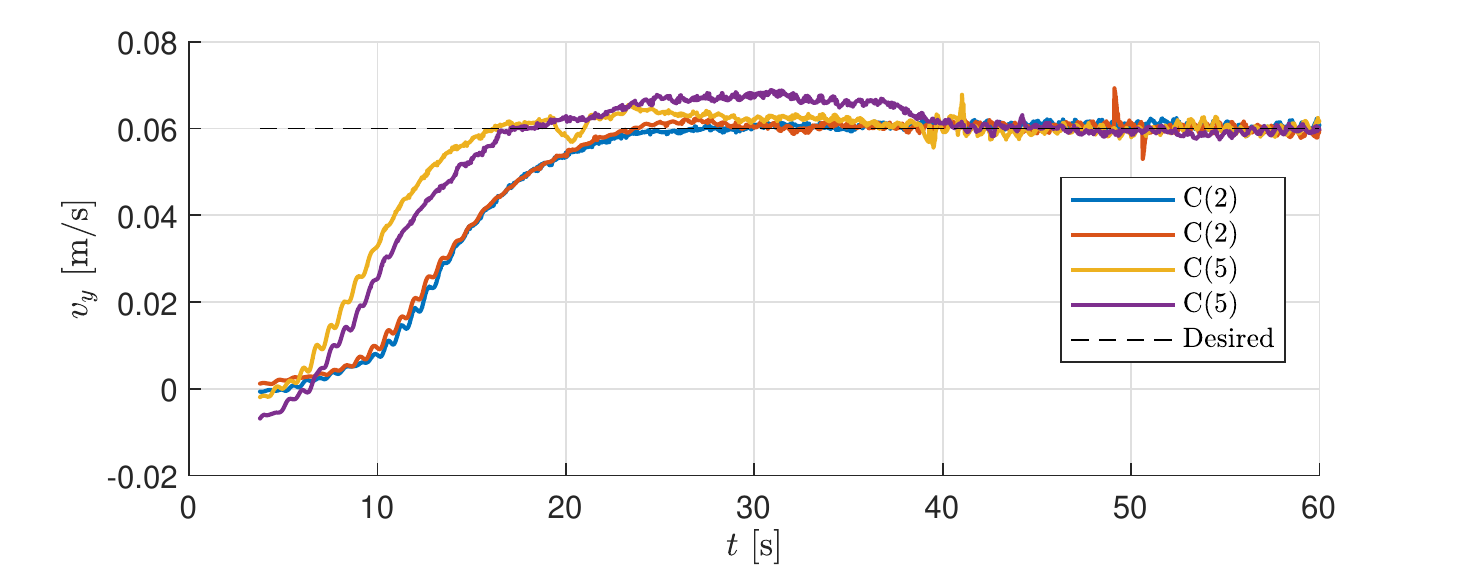}
       }
          \hfill
    \subfloat[\label{fig:mismatchedYaw}]{\includegraphics[width=\linewidth, trim={0 0 0 0.5cm}]{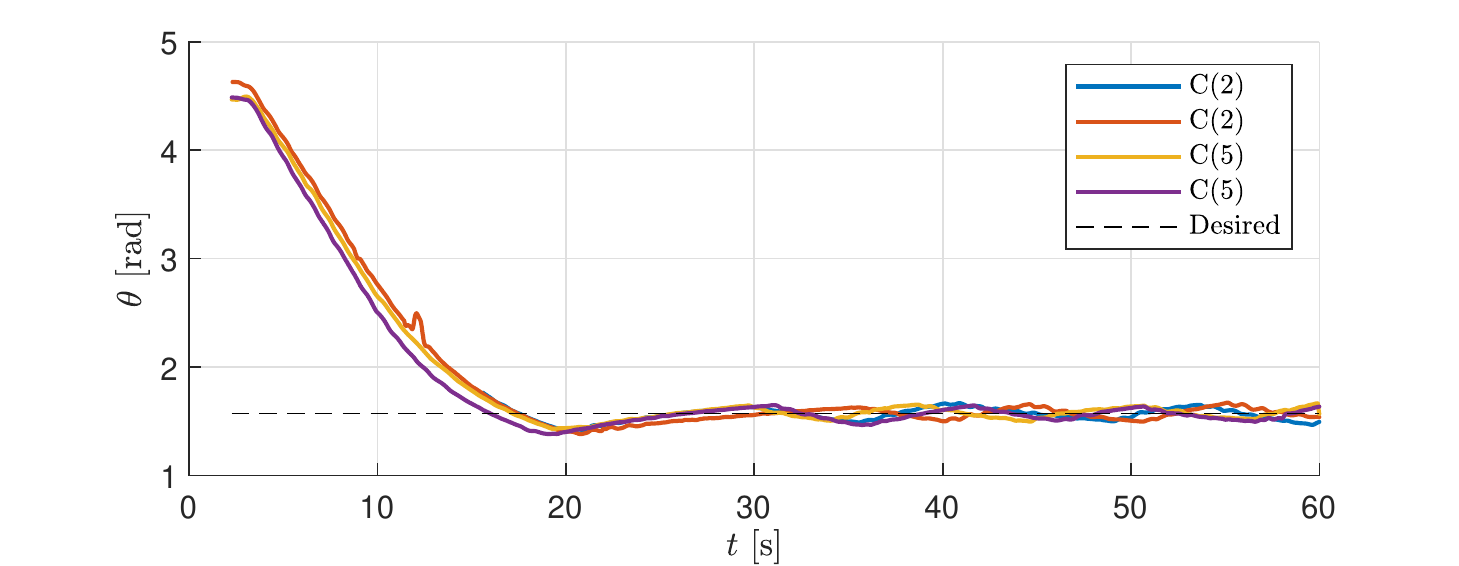}
        }
    \caption{(a) Velocity and (b) yaw step response for the $5$ boat configuration where the drag coefficients are chosen as $C_L(n)$ and $C_R(n)$ with $n=5$ and $n=2$. Two tests are shown for each.}
    \label{fig:mismatched}
    \vspace{-0.5cm}
\end{figure}


\section{Discussion} \label{sec:discussion}

The controller presented in Sec.~\ref{sec:control} is intended to allow parallel lattices of Modboats to be controllable as a single unit. The experimental results presented in Sec.~\ref{sec:experiments} show that this is the case: parallel lattices of Modboats can be driven at a desired surge velocity and to a given yaw angle. With a suitable control law, this performance can easily be extended to waypoint tracking and more complex behaviors. 

Velocity tracking, presented in Fig.~\ref{fig:velResults} and summarized in Table~\ref{tab:performanceQ}, is highly effective regardless of the lattice size. All tested lattices tracked the desired velocity to within $0.3~\si{cm/s}$ and achieve it within $5.7~\si{s}$, which shows that the controller is not sensitive to lattice size. The controller can achieve velocities from $3.0\text{--}8.0~\si{cm/s}$, bounded by the minimum amplitude needed to activate the flippers and the maximum allowable amplitude. While a maximum speed of $0.5$ body-lengths per second (the Modboat diameter is $15.2~\si{cm}$) is not fast, it is sufficient. Velocity tracking performance is comparable during combined yaw/velocity testing for both rise time and RMS error. There is a noticeable drop in velocity during the turning maneuver (see Fig.~\ref{fig:trajectory}), but it recovers similarly once the turn is complete.

Yaw can be tracked very accurately when the lattice is moving. Small adjustments to the amplitude of each boat's oscillation yaws the lattice and tracks the desired heading to within $0.11~\si{rad}$ ($6.5^\circ$), as shown in Table~\ref{tab:performanceQ}. Yaw tracking suffers, however, when no velocity is desired; although the yaw can be driven to the desired value overall, the tracking error increases significantly, as shown in Table~\ref{tab:performanceQ}. This occurs because the lattice attempts to remain in place, but finer adjustments require reversing thrust. If there is enough initial overshoot, significant oscillations around the desired value begin and are not damped  by the derivative term, such as for the two and three boat lattices in Fig.~\ref{fig:yawResults} with  $\pi/2~\si{rad}$ step input and for the four and five boat lattices in Fig.~\ref{fig:mismatchedYaw} with $\pi~\si{rad}$ step input. Otherwise, a significant settling time is seen as the boats overcome the large inertia of the lattice, such as for the four and five boat lattices in Fig.~\ref{fig:yawResults}. Pure yaw control, however, displays reasonable rise times that are not significantly affected by the step input, as shown in Table~\ref{tab:performanceQ}, whereas rise time when swimming is significantly slower.

A secondary problem with controlling yaw alone is shown in Fig.~\ref{fig:sidewaysAfterYaw}; after achieving the desired yaw with minimal COM motion, the lattice drifts sideways. Since our controller can produce forces only along its surge axis, we cannot counteract this. Both issues stem from the same cause: the Modboats' unique propulsive mechanism cannot smoothly transition from forward to reverse. Eq.~\eqref{eq:waveformNew} is discontinuous when $(\phi_0)_i$ changes, especially if $A_i$ is small. The transition from forward to reverse thrust thus creates sideways forces and yaw torques that disrupt the controller.

We also consider the sensitivity of the controller to the drag coefficients. Fig.~\ref{fig:mismatched} shows the controller performance when given incorrect drag coefficients. Behavior for velocity (Fig.~\ref{fig:mismatchedVel}) matches what might be intuited: it takes longer to achieve the desired velocity, but the feedback controller is sufficient since the desired velocity is achievable. The final velocity tracking, which is based on a thrust/drag balance, is comparable to that achieved with the correct drag coefficients. The behavior in yaw (Fig.~\ref{fig:mismatchedYaw}) shows similar behavior, although the transient region is closer to the original behavior. This is likely because the $I\alpha$ term dominates the transient region, minimizing the effect of the mismeasured $C_L$. This behavior is encouraging, as it is likely that non-parallel configurations will be controllable even if their drag coefficients are not explicitly characterized. 


\section{Conclusion}

In this work we have presented a centralized approach that allows parallel lattices of underactuated modules to swim as a single unit. When applied to lattices of Modboats, it minimizes internal forces and guarantees no unintentional undocking. The controller is capable of tracking a desired surge velocity and yaw angle for configurations of various sizes, and has been experimentally verified for lattices consisting of two to five boats. Velocity tracking is shown to be highly effective, and yaw tracking is shown to be effective when the lattice is moving forward.

The controller struggles to track yaw angle while stationary, however, generating oscillations and sideways drift that cannot be counteracted by the given control law. This is poorly suited for docking or station keeping, both of which require precise orientation control while stationary. Future work will consider ways to add control of the lattice's sway axis and reduce the observed oscillations.

A number of approaches for controlling smaller lattices --- i.e. single Modboats --- already exist~\cite{Knizhnik2020a,Knizhnik2021a}. For larger lattices we theorize that our controller will continue to perform well, although problems may arise as the yaw authority of individual boats scales linearly with their distance from the center of mass, but the angular drag and inertia scale quadratically. Velocity tracking is likely to remain effective, but our testing tank is too small for larger configurations. 

Finally, although we have considered only parallel lattices in this work, the control structure does not explicitly contain this restriction. Minor adjustments should allow the controller to function effectively for arbitrary lattices. Future work will consider the behavior of this controller for non-parallel lattices while still guaranteeing no unintentional undocking. We will also explore additional ways to avoid undocking that allow more flexible control inputs, which should increase the capabilities of the final lattice. Finally, we hope to consider the robustness of this controller to disturbances such as are commonly found in the ocean.


\section*{Acknowledgment}

We thank Dr. M. Ani Hsieh for the use of her instrumented water basin in obtaining all of the testing data.

\bibliographystyle{./bibliography/IEEEtran}
\bibliography{./bibliography/IEEEabrv,./bibliography/iros2020,./bibliography/nonpaper,./bibliography/references}

\begin{thebibliography}{10}
\providecommand{\url}[1]{#1}
\csname url@samestyle\endcsname
\providecommand{\newblock}{\relax}
\providecommand{\bibinfo}[2]{#2}
\providecommand{\BIBentrySTDinterwordspacing}{\spaceskip=0pt\relax}
\providecommand{\BIBentryALTinterwordstretchfactor}{4}
\providecommand{\BIBentryALTinterwordspacing}{\spaceskip=\fontdimen2\font plus
\BIBentryALTinterwordstretchfactor\fontdimen3\font minus
  \fontdimen4\font\relax}
\providecommand{\BIBforeignlanguage}[2]{{%
\expandafter\ifx\csname l@#1\endcsname\relax
\typeout{** WARNING: IEEEtran.bst: No hyphenation pattern has been}%
\typeout{** loaded for the language `#1'. Using the pattern for}%
\typeout{** the default language instead.}%
\else
\language=\csname l@#1\endcsname
\fi
#2}}
\providecommand{\BIBdecl}{\relax}
\BIBdecl

\bibitem{Paulos2015}
J.~Paulos, N.~Eckenstein, T.~Tosun, J.~Seo, J.~Davey, J.~Greco, V.~Kumar, and
  M.~Yim, ``{Automated Self-Assembly of Large Maritime Structures by a Team of
  Robotic Boats},'' \emph{IEEE Transactions on Automation Science and
  Engineering}, vol.~12, no.~3, pp. 958--968, 2015.

\bibitem{Vasilescu2010AMOURPayloads}
I.~Vasilescu, C.~Detweiler, M.~Doniec, D.~Gurdan, S.~Sosnowski, J.~Stumpf, and
  D.~Rus, ``{AMOUR V: A Hovering Energy Efficient Underwater Robot Capable of
  Dynamic Payloads},'' \emph{The International Journal of Robotics Research},
  vol.~29, no.~5, pp. 547--570, 2010.

\bibitem{Mintchev2014}
S.~Mintchev, E.~Donati, S.~Marrazza, and C.~Stefanini, ``{Mechatronic design of
  a miniature underwater robot for swarm operations},'' in \emph{2014 IEEE
  International Conference on Robotics and Automation (ICRA)}, Hong Kong, 2014,
  pp. 2938--2943.

\bibitem{Mintchev2014TowardsRobots}
S.~Mintchev, R.~Ranzani, F.~Fabiani, and C.~Stefanini, ``{Towards docking for
  small scale underwater robots},'' \emph{Autonomous Robots}, vol.~38, no.~3,
  pp. 283--299, 8 2014.

\bibitem{Seo2013AssemblyRobots}
J.~Seo, M.~Yim, and V.~Kumar, ``{Assembly planning for planar structures of a
  brick wall pattern with rectangular modular robots},'' in \emph{IEEE
  International Conference on Automation Science and Engineering}, 2013, pp.
  1016--1021.

\bibitem{Seo2016AssemblyModules}
J.~Seo, M.~Yim, and V.~Kumar, ``{Assembly sequence planning for constructing
  planar structures with rectangular modules},'' in \emph{2016 IEEE
  International Conference on Robotics and Automation (ICRA)}, 6 2016, pp.
  5477--5482.

\bibitem{OHara2014}
I.~O'Hara, J.~Paulos, J.~Davey, N.~Eckenstein, N.~Doshi, T.~Tosun, J.~Greco,
  J.~Seo, M.~Turpin, V.~Kumar, and M.~Yim, ``{Self-assembly of a swarm of
  autonomous boats into floating structures},'' in \emph{2014 IEEE
  International Conference on Robotics and Automation (ICRA)}, Hong Kong, 2014,
  pp. 1234--1240.

\bibitem{Wang2018DesignVehicle}
W.~Wang, L.~A. Mateos, S.~Park, P.~Leoni, B.~Gheneti, F.~Duarte, C.~Ratti, and
  D.~Rus, ``{Design, Modeling, and Nonlinear Model Predictive Tracking Control
  of a Novel Autonomous Surface Vehicle},'' in \emph{2018 IEEE International
  Conference on Robotics and Automation (ICRA)}, Brisbane, Australia, 2018, pp.
  6189--6196.

\bibitem{Wang2020RoboatEnvironments}
W.~Wang, T.~Shan, P.~Leoni, D.~Fern{\'{a}}ndez-Guti{\'{e}}rrez, D.~Meyers,
  C.~Ratti, and D.~Rus, ``{Roboat II: A Novel Autonomous Surface Vessel for
  Urban Environments},'' in \emph{2020 IEEE/RSJ International Conference on
  Intelligent Robots and Systems (IROS)}, Las Vegas, NV (Virtual), 2020, pp.
  1740--1747.

\bibitem{Wang2020DistributedVessels}
W.~Wang, Z.~Wang, L.~Mateos, K.~W. Huang, M.~Schwager, C.~Ratti, and D.~Rus,
  ``{Distributed motion control for multiple connected surface vessels},'' in
  \emph{2020 IEEE/RSJ International Conference on Intelligent Robots and
  Systems (IROS)}.\hskip 1em plus 0.5em minus 0.4em\relax Las Vegas, NV
  (Virtual): Institute of Electrical and Electronics Engineers Inc., 10 2020,
  pp. 11\,658--11\,665.

\bibitem{Park2019CoordinatedApproach}
S.~Park, E.~Kayacan, C.~Ratti, and D.~Rus, ``{Coordinated control of a
  reconfigurable multi-vessel platform: Robust control approach},'' in
  \emph{2019 IEEE International Conference on Robotics and Automation (ICRA)},
  5 2019, pp. 4633--4639.

\bibitem{Knizhnik2021}
G.~Knizhnik and M.~Yim, ``{Docking and Undocking a Modular Underactuated
  Oscillating Swimming Robot},'' in \emph{2021 IEEE International Conference on
  Robotics and Automation (ICRA)}, Xi'an, China, 2021, pp. 6754--6760.

\bibitem{modboatsOnline}
G.~Knizhnik, ``Modboat: A single-motor modular self-reconfigurable robot,'' Feb
  2022, \url{https://www.modlabupenn.org/modboats/}.

\bibitem{Knizhnik2020a}
G.~Knizhnik, P.~Dezonia, and M.~Yim, ``{Pauses Provide Effective Control for an
  Underactuated Oscillating Swimming Robot},'' \emph{IEEE Robotics and
  Automation Letters}, vol.~5, no.~4, pp. 5075--5080, 10 2020.

\bibitem{Gabrich2020ModQuad-DoF:Quadrotors}
B.~Gabrich, G.~Li, and M.~Yim, ``{ModQuad-DoF: A Novel Yaw Actuation for
  Modular Quadrotors},'' in \emph{2020 IEEE International Conference on
  Robotics and Automation (ICRA)}, Paris, France, 5 2020, pp. 8267--8273.

\bibitem{Knizhnik2021a}
G.~Knizhnik and M.~Yim, ``{Thrust Direction Control of an Underactuated
  Oscillating Swimming Robot},'' in \emph{2021 IEEE/RSJ International
  Conference on Intelligent Robots and Systems (IROS)}, Prague, Czech Republic
  (Virtual), 2021, pp. 8665--8670.

\end{thebibliography}

\end{document}